\documentclass[letterpaper, 10 pt, conference]{ieeeconf}  

\IEEEoverridecommandlockouts                              

\usepackage{amsmath} 
\usepackage{amssymb}  
\usepackage{amsfonts}
\usepackage{algorithm}
\usepackage{algpseudocode}
\usepackage{xcolor}
\usepackage{soul}
\usepackage[switch]{lineno} 
\usepackage{url}
\usepackage{amsmath}
\usepackage{mathtools} 
\usepackage[capitalise]{cleveref}
\usepackage{cite}
\usepackage{graphicx}
\usepackage{cuted}
\usepackage{capt-of}
\usepackage{booktabs}
\usepackage{pifont}
\definecolor{mygreen}{RGB}{0,150,0}
\definecolor{myred}{RGB}{200,0,0}
\newcommand{\cmark}{\textcolor{mygreen}{\ding{51}}}
\newcommand{\xmark}{\textcolor{myred}{\ding{55}}}

\newtheorem{assumption}{\bf{Assumption}}

\newtheorem{lemma}{\bf{Lemma}}

\newtheorem{problem}{\bf{Problem}}

\xdefinecolor{redZ}{RGB}{234,29,93}
\xdefinecolor{blueZ}{RGB}{19,106,213}
\xdefinecolor{yellowZ}{RGB}{251,138,46}
\xdefinecolor{greenYL}{RGB}{0, 191, 0}

\title{\LARGE \bf
Safe Consensus of Cooperative Manipulation with Hierarchical Event-Triggered Control Barrier Functions
}

\author{Simiao Zhuang, Bingkun Huang, Zewen Yang$^{\dagger}$
\thanks{S. Zhuang, B. Huang, and Z. Yang are with Munich Institute of Robotics and Machine Intelligence (MIRMI), Technical University of Munich (TUM), 80992 Munich, Germany.}
\thanks{
$^{\dagger}$Corresponding Author. \textless zewen.yang@tum.de \textgreater}
}

\begin{document}

\maketitle
\thispagestyle{empty}
\pagestyle{empty}


\begin{abstract}
Cooperative transport and manipulation of heavy or bulky payloads by multiple manipulators requires coordinated formation tracking, while simultaneously enforcing strict safety constraints in varying environments with limited communication and real-time computation budgets. 
This paper presents a distributed control framework that achieves consensus coordination with safety guarantees via hierarchical event-triggered control barrier functions (CBFs). 
We first develop a consensus-based protocol that relies solely on local neighbor information to enforce both translational and rotational consistency in task space. 
Building on this coordination layer, we propose a three-level hierarchical event-triggered safety architecture with CBFs, which is integrated with a risk-aware leader selection and smooth switching strategy to reduce online computation. 
The proposed approach is validated through real-world hardware experiments using two Franka manipulators operating with static obstacles, as well as comprehensive simulations demonstrating scalable multi-arm cooperation with dynamic obstacles. 
Results demonstrate higher precision cooperation under strict safety constraints, achieving substantially reduced computational cost and communication frequency compared to baseline methods. 
Project page is available at \url{https://zsmandreas.github.io/Safe-Consensus-HET-CBF}.
\end{abstract}

\section{Introduction}
Multi-agent systems (MASs) enable robots to accomplish tasks that are difficult or infeasible for a single platform, for instance, when payload weight or size exceeds individual capabilities~\cite{Olfati_PIEEE_2027_Consensus}. 
In cooperative transport and manipulation, multiple manipulators act as agents physically coupled through a shared payload, yielding a closed-chain system: the team must track a common reference while maintaining prescribed relative end-effector (EE) poses to preserve formation stability~\cite{Dohmann_TRO_2020_Distributed}. Recent work has developed distributed coordination laws based on consensus and cooperative tracking, enabling team-level objectives using only local neighbor information~\cite{Yang_ACC_2024_Cooperative,Dai_TNNLS_2025_Cooperative}. 
However, robust performance remains challenging in practice because safety requirements in cluttered environments, limited communication, and tight real-time computation budgets jointly restrict feasibility and degrade robustness~\cite{Javier_IJRR_2017_Multi}. 

From a coordination viewpoint, cooperative transport and manipulation naturally map to formation objectives, whereas safety necessitates additional constraint enforcement. 
Each agent must preserve desired relative displacements and poses with respect to its neighbors while the team tracks a reference trajectory~\cite{Cos_RAL_2022_Adaptive,Farivarnejad_ARCRA_2022_Multirobot}. 
In human-in-the-loop cooperative manipulation, it becomes even more demanding, where a human operator specifies high-level intent (e.g., a motion direction or target), and the robotic team must autonomously execute the motion while preserving formation consistency and constrained behavior~\cite{MusicIROS2017,CaccavaleTMech2008}.
Many formation-control methods have been proposed, including virtual-structure methods, leader–follower architectures, and distributed consensus protocols~\cite{Ren_IET_2007_Consensus,Yan_OE_2020_Virtual,Yang_CDC_2021_Distributed}. 
Among these, consensus-based control is particularly attractive because it is local and scalable, enabling formation regulation using only neighbor information~\cite{LiuBucknallRobotica2018,RoyIROS2018,LeeChwaISR2018}. 
Yet, in safety-critical cooperative manipulation, tracking and formation maintenance alone are insufficient: the system must also satisfy strict constraints such as obstacle avoidance, inter-agent collision avoidance, and actuator limits under second-order dynamics. 
This motivates the need for a principled safety mechanism that integrates cleanly with distributed coordination~\cite{WangAmesEgerstedtTRO2017,JiangGuoTCNS2024}.

Control barrier functions (CBFs) have emerged as an effective tool for enforcing safety constraints by rendering a prescribed safe set forward invariant through appropriately designed control laws, often implemented via online safety filtering~\cite{morton2025safe}. 
For robotic manipulators and MASs with second-order dynamics, input-constrained CBFs provide a systematic way to impose safety requirements in the presence of explicit actuation limits with moving obstacles ~\cite{iccbf}. 
However, scaling CBF-based safety filtering to large-scale cooperative manipulation in MASs remains computationally demanding in real time~\cite{WangAmesEgerstedtTRO2017,luo2020multi}. 
As the number of agents and obstacles grows, the number of collision pairs and associated safety constraints increases rapidly, and solving quadratic problems (QPs) at every control step for all agent can become impractical. 
Especially in time-varying environments, frequent updates of obstacle information further amplify online computation. 
Therefore, it is necessary to develop a distributed, computationally efficient safety framework that simultaneously enforces formation consistency and collision avoidance for cooperative manipulation. 

To address these challenges, this paper presents a novel hierarchical event-triggered (HET) control framework. 
First, we design a distributed consensus-tracking protocol that relies only on local neighbor information to ensure consensus formation. 
Second, we develop a three-layer HET safety architecture with a risk-aware leader selection and smooth switching strategy, reducing online computation and communication overhead while preserving constraint satisfaction under second-order dynamics and actuation limits. 
Finally, the real-world hardware experiments and comprehensive simulation demonstrate superior performance compared to existing baseline methods.

\section{Preliminaries}
\label{sec_preliminaries}

\subsection{Graph Theory and Notation}\label{subsec_notation}
The communication topology of the MAS is modeled by an undirected graph $\mathcal{G} = (\mathcal{V}, \mathcal{E})$ over $N \in \mathbb{N}$ agents, where $\mathcal{V} = {1, \ldots, N}$ is the node set indexing the agents and $\mathcal{E} \subseteq \mathcal{V} \times \mathcal{V}$ is the edge set. 
An edge $(i, j) \in \mathcal{E}$ indicates that agents $i$ and $j$ exchange information bidirectionally. The neighbor set of agent $i$ is defined as $\mathcal{N}_i = {j \in \mathcal{V} : (i,j) \in \mathcal{E}}$. 
The adjacency matrix is $\mathcal{A} = [a_{ij}] \in \mathbb{R}^{N \times N}$, where $a_{ij} = a_{ji} = 1$ if $(i,j) \in \mathcal{E}$, and $a_{ij} = 0$ otherwise.
For each agent $i \in \mathcal{V}$, let $\boldsymbol{p}_i(\cdot) \in \mathbb{R}^3$ and $\boldsymbol{R}_i(\cdot) \in \mathrm{SO}(3)$ denote the EE position and orientation derived from the forward kinematics map.

\subsection{System Modeling and Problem Formulation}
To address the complexity of cooperative transport, we adopt a hierarchical control architecture that decouples the high-level consensus planning in the task space from the low-level nonlinear manipulator dynamics in the joint space.

\subsubsection{Manipulator Dynamics and Task-Space Formulation}
The dynamics of the $i$-th manipulator follows the standard rigid-body model
\begin{equation}
\boldsymbol{M}_i(\boldsymbol{q}_i)\ddot{\boldsymbol{q}}_i 
+ \boldsymbol{C}_i(\boldsymbol{q}_i,\dot{\boldsymbol{q}}_i)\dot{\boldsymbol{q}}_i 
+ \boldsymbol{g}_i(\boldsymbol{q}_i) = \boldsymbol{\tau}_i,
\label{eq:joint_dynamics}
\end{equation}
where $\boldsymbol{q}_i,\dot{\boldsymbol{q}}_i,\ddot{\boldsymbol{q}}_i\in\mathbb{R}^n$ denote the joint position, velocity, and acceleration, 
$\boldsymbol{M}_i$, $\boldsymbol{C}_i$, and $\boldsymbol{g}_i$ represent the inertia, Coriolis/centrifugal, and gravity terms, and $\boldsymbol{\tau}_i$ is the control torque.

The coordination objective is defined in task space. Let $\boldsymbol{x}_i=[\boldsymbol{p}_i^\top,\boldsymbol{\phi}_i^\top]^\top\in\mathbb{R}^6$ denote the end-effector (EE) pose, where $\boldsymbol{p}_i\in\mathbb{R}^3$ is the Cartesian position and $\boldsymbol{\phi}_i\in\mathbb{R}^3$ is a minimal orientation representation. The EE twist is $\boldsymbol{v}_i=[\dot{\boldsymbol{p}}_i^\top,\boldsymbol{\omega}_i^\top]^\top$.
The differential kinematics is
\begin{equation}
\boldsymbol{v}_i = \boldsymbol{J}_i(\boldsymbol{q}_i)\dot{\boldsymbol{q}}_i,
\label{eq:diff_kin}
\end{equation}
where $\boldsymbol{J}_i(\boldsymbol{q}_i)\in\mathbb{R}^{6\times n}$ is the geometric Jacobian. Differentiating \eqref{eq:diff_kin} gives
\begin{equation}
\dot{\boldsymbol{v}}_i = \boldsymbol{J}_i(\boldsymbol{q}_i)\ddot{\boldsymbol{q}}_i 
+ \dot{\boldsymbol{J}}_i(\boldsymbol{q}_i,\dot{\boldsymbol{q}}_i)\dot{\boldsymbol{q}}_i.
\label{eq:task_acc}
\end{equation}

\subsubsection{Hierarchical Control Design}
We propose a feedback linearization scheme to transform the nonlinear  manipulator dynamics into a linear double-integrator system in the task space for formation consensus control. 
Let $\boldsymbol{u}_i \in \mathbb{R}^6$ be the virtual control input to be designed by the consensus protocol. 
The reference joint acceleration $\ddot{\boldsymbol{q}}_{i}^{cmd}$ required to track the consensus protocol as 
\begin{equation}
    \ddot{\boldsymbol{q}}_{i}^{cmd} = \boldsymbol{J}_i^+(\boldsymbol{q}_i) 
    \!\big( \boldsymbol{u}_i - \dot{\boldsymbol{J}}_i(\boldsymbol{q}_i, 
    \dot{\boldsymbol{q}}_i)\dot{\boldsymbol{q}}_i \big) 
    + \boldsymbol{N}_i(\boldsymbol{q}_i)\boldsymbol{\eta}_i,
    \label{eq:inverse_acc}
\end{equation}
where $\boldsymbol{J}_i^\dagger = \boldsymbol{J}_i^\top(\boldsymbol{J}_i \boldsymbol{J}_i^\top)^{-1}$ denotes the right Moore-Penrose pseudoinverse, $\boldsymbol{N}_i = (\boldsymbol{I}_n - \boldsymbol{J}_i^\dagger \boldsymbol{J}_i) \in \mathbb{R}^{n \times n}$ is the null-space projector exploiting the  $(n-6)$-dimensional redundancy for secondary tasks (e.g., joint limit  avoidance, singularity escape), and $\boldsymbol{\eta}_i \in \mathbb{R}^n$  is an auxiliary joint acceleration vector. 
Substituting $\ddot{\boldsymbol{q}}_{i}^{cmd}$ into \eqref{eq:joint_dynamics}, the control torque is computed via the inverse dynamics law as follows
\begin{equation}
    \boldsymbol{\tau}_i = \boldsymbol{M}_i(\boldsymbol{q}_i)
    \ddot{\boldsymbol{q}}_{i}^{cmd} + \boldsymbol{C}_i(\boldsymbol{q}_i, 
    \dot{\boldsymbol{q}}_i)\dot{\boldsymbol{q}}_i + \boldsymbol{g}_i(\boldsymbol{q}_i).
    \label{eq:inverse_dynamics_ctrl}
\end{equation}
Substituting \eqref{eq:inverse_acc} into \eqref{eq:inverse_dynamics_ctrl} and considering the  full-row-rank property $\boldsymbol{J}_i\boldsymbol{J}_i^+ = \boldsymbol{I}_6$, it yields the canonical double-integrator dynamics in the task space as follows
\begin{equation}
    \dot{\boldsymbol{x}}_i = \boldsymbol{v}_i, \qquad
    \dot{\boldsymbol{v}}_i = \boldsymbol{u}_i.
    \label{eq:double_integrator_model}
\end{equation}
This formulation reduces the consensus protocol design to specifying $\boldsymbol{u}_i$, which is designed in \cref{subsec_consensus_tracking}.
The exact feedback linearization \eqref{eq:double_integrator_model} relies on $\boldsymbol{J}_i\boldsymbol{J}_i^{+} = \boldsymbol{I}_6$. In practice, a numerically robust approximation to $\boldsymbol{J}_i^{+}$ is employed near kinematic singularities, as detailed in Section~\ref{subsec_acc_mapping}.

\subsubsection{Coordination Objective}
In cooperative rigid-body transport, enforcing strict synchronization of both translational and rotational degrees of freedom imposes a fully constrained kinematic coupling among agents. 
However, manufacturing tolerances, calibration errors, or unmodeled components may introduce inevitable pose discrepancies at the grasp points, where unavoidable pose discrepancies give rise to undesirable internal wrenches that can damage the payload or destabilize the formation. 
To relax this problem, we propose a practical coordination task that enforces asymptotic positional consensus while relaxing the rotational requirement to a bounded compliance condition. 
The coordination objective is formalized as follows.
\begin{problem}[Practical Consensus Formation]
\label{prob_split_coordination}
Consider a linearized double-integrator MASs \eqref{eq:double_integrator_model} communicating over a connected undirected graph $\mathcal{G}$. 
The object is to design a distributed control law $\boldsymbol{u}_{i}$ for each agent $i \in \mathcal{V}$ such that, for all edges $(i,j) \in \mathcal{E}$, the following conditions hold simultaneously:
\begin{itemize}
\item \textit{Asymptotic Positional Consensus:} The relative position error converges asymptotically to the prescribed formation offset $\boldsymbol{d}_{ij} \in \mathbb{R}^3$, i.e.,
\begin{equation}
\lim_{t \to \infty} \|\boldsymbol{p}_j(t) - \boldsymbol{p}_i(t) - \boldsymbol{d}_{ij} \| = 0.
\label{eq_prob_pos}
\end{equation}
\item \textit{Bounded Orientational Consensus:} There exists a finite time $T \in \mathbb{R}_{>0}$ such that the relative angular error $\boldsymbol{\theta}_{ij} \in \mathbb{R}^3$, extracted from the rotation matrix $\boldsymbol{R}_i^\top \boldsymbol{R}_j$, satisfies
\begin{equation}
\boldsymbol{\theta}_{ij}(t) \in \Omega_\epsilon \coloneqq \bigl\{\, \boldsymbol{\xi} \in \mathbb{R}^3 \mid \|\boldsymbol{\xi}\| \leq \epsilon_{\mathrm{ori}} \,\bigr\}, \quad \forall t \geq T,
\label{eq_prob_ori}
\end{equation}
where $\epsilon_{\mathrm{ori}} \in \mathbb{R}_{>0}$ is a small tolerance. 
\end{itemize}
\end{problem}
The structure of \cref{prob_split_coordination} is both physically motivated and mathematically non-conservative. 
Regarding condition~\eqref{eq_prob_pos}, asymptotic positional consensus is necessary for accurate trajectory tracking of the shared payload. 
Condition~\eqref{eq_prob_ori} relaxes strict orientational consensus. 
Because during cooperative transport, each manipulator must resolve obstacle avoidance and joint limit constraints, inevitably resulting in transient rotational deviations that cannot be globally suppressed without sacrificing feasibility. 
Moreover, the parameter $\epsilon_{\mathrm{ori}}$ is tunable, where reducing it  approaches rigid coordination, while increasing it improves robustness at the cost of rotational accuracy.

\subsubsection{Control Barrier Functions}
Consider a nonlinear control-affine system in the form:
\begin{equation}
    \dot{\boldsymbol{s}} = f(\boldsymbol{s}) + g(\boldsymbol{s})\boldsymbol{u},
    \label{eq:affine_sys}
\end{equation}
where $\boldsymbol{s} \in \mathcal{D} \subset \mathbb{R}^n$ is the generic 
system state and $\boldsymbol{u} \in \mathcal{U} \subset \mathbb{R}^m$ is 
the control input. 
Let $\mathcal{C} \subset \mathcal{D}$ be a safe set 
defined as the superlevel set of a continuously differentiable function 
$h: \mathcal{D} \to \mathbb{R}$, i.e., $\mathcal{C} \coloneqq \{ \boldsymbol{s} \in \mathcal{D} \mid h(\boldsymbol{s}) \geq 0 \}$ with its boundary $\partial \mathcal{C} \coloneqq \{ \boldsymbol{s} \in 
\mathcal{D} \mid h(\boldsymbol{s}) = 0 \}$. The safety of the system is 
guaranteed if the set $\mathcal{C}$ is \textit{forward invariant}, meaning 
that for any initial condition $\boldsymbol{s}(0) \in \mathcal{C}$, the 
trajectory satisfies $\boldsymbol{s}(t) \in \mathcal{C}$ for all $t \in \mathbb{R}_{\ge 0}$.

A continuously differentiable function $h: \mathcal{D} \to \mathbb{R}$ is 
a CBF for the system \eqref{eq:affine_sys} if 
there exists an extended class-$\mathcal{K}_\infty$ function $\alpha: 
\mathbb{R} \to \mathbb{R}$ such that, for all $\boldsymbol{s} \in \mathcal{D}$,
\begin{equation}
    \sup_{\boldsymbol{u} \in \mathcal{U}} \left[ L_f h(\boldsymbol{s}) 
    + L_g h(\boldsymbol{s})\boldsymbol{u} \right] \geq 
    -\alpha(h(\boldsymbol{s})),
    \label{eq:cbf_condition}
\end{equation}
where $L_f h(\boldsymbol{s}) = \frac{\partial h}{\partial \boldsymbol{s}} 
f(\boldsymbol{s})$ and $L_g h(\boldsymbol{s}) = \frac{\partial h}{\partial 
\boldsymbol{s}} g(\boldsymbol{s})$ denote the Lie derivatives.

\section{Methodology}
\begin{figure*}
    \centering
    \includegraphics[width=1\linewidth]{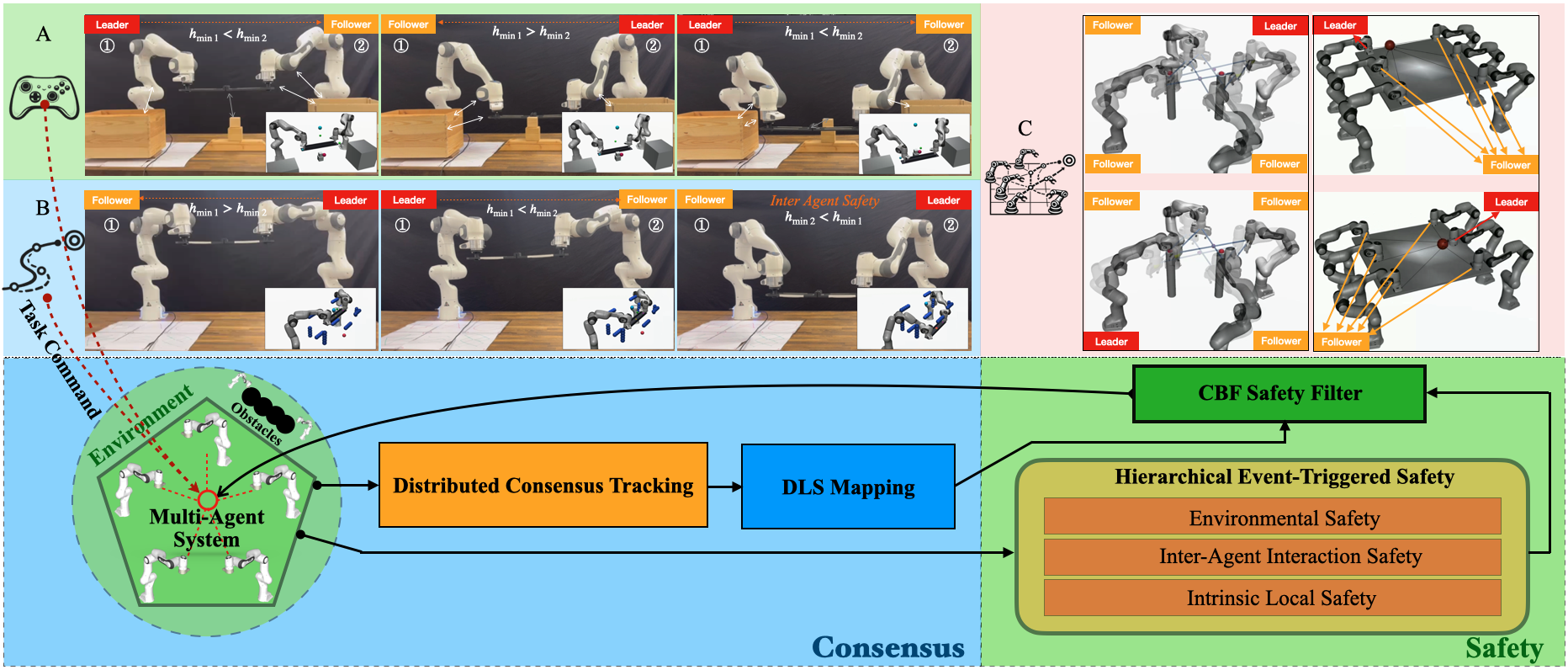}
    \caption{Overview of the proposed framework. The $h_{min}$ function encodes the minimum distance between the robot and the environmental obstacles.}
    \label{fig:overview}
\end{figure*}

\subsection{Distributed Consensus Tracking Protocol}
\label{subsec_consensus_tracking}
To achieve the two-level coordination objective defined in Problem \ref{prob_split_coordination}, we design the virtual task-space control input $\boldsymbol{u}_{i}$ only based on local neighbor information in $\mathcal{N}_i$. 
The control law consists of a control feedback term to maintain geometric formation for high-level planning and a navigational feedforward term to track the group reference for low-level torque control.

For the translational task space, the consensus protocol is designed as a distributed PD-like controller
\begin{align}
\boldsymbol{u}_{i}^{\mathrm{pos}} & = 
  - \sum_{j \in \mathcal{N}_i} a_{ij} \Big[ k_p \big( (\boldsymbol{p}_i - \boldsymbol{p}_j) - \boldsymbol{d}_{ij} \big) + k_d (\dot{\boldsymbol{p}}_i - \dot{\boldsymbol{p}}_j) \Big] \nonumber \\
 & - b_{i} \Big[ k_p (\boldsymbol{p}_i - \boldsymbol{p}_{0} - \boldsymbol{d}_{0,i}) + k_d (\dot{\boldsymbol{p}}_i - \dot{\boldsymbol{p}}_{0})  - \ddot{\boldsymbol{p}}_{0} \Big],
\label{eq:consensus_law_pos}
\end{align}
where $b_{i} =1$ if agent $i$ has direct access to the reference state $\boldsymbol{p}_{0}, \dot{\boldsymbol{p}}_{0}, \ddot{\boldsymbol{p}}_{0}$, and $b_{i}=0$ otherwise. 
Moreover, $k_p, k_d \in \mathbb{R}_{>0}$ are the control gains and $\boldsymbol{d}_{ij}\in\mathbb{R}^3$ is the desired relative displacement from agent $i$ to agent $j$ in task space. 
Furthermore, the angular acceleration controller is
\begin{align}
\boldsymbol{u}_{i}^{\mathrm{ori}} = 
 & - \sum_{j \in \mathcal{N}_i} a_{ij} \Big[ k_{\theta} 
   \tilde{\boldsymbol{\theta}}_{ij} 
   + k_\omega (\boldsymbol{\omega}_i - \boldsymbol{\omega}_j) \Big] 
   \nonumber \\
 & - b_{i} \Big[ k_{\theta} \tilde{\boldsymbol{\theta}}_{0i} 
   + k_\omega (\boldsymbol{\omega}_i - \boldsymbol{\omega}_{0}) 
   - \dot{\boldsymbol{\omega}}_{0} \Big],
\label{eq:consensus_law_ori}
\end{align}
where the world-frame orientation  error $\tilde{\boldsymbol{\theta}}_{ij} = 
\log(\boldsymbol{R}_i \boldsymbol{R}_j^\top)^\vee \in \mathbb{R}^3$,   $\tilde{\boldsymbol{\theta}}_{0i} = \log(\boldsymbol{R}_i \boldsymbol{R}_0^\top)^\vee$ is the orientation error relative to the reference, $\boldsymbol{\omega}_i \in \mathbb{R}^3$ denotes the angular velocity of the EE of agent $i$, expressed in the world frame, and $k_{\theta}, k_\omega \in \mathbb{R}_{>0}$ are control gains. 

Therefore, the total task-space control input is formed as $\boldsymbol{u}_{i} = [(\boldsymbol{u}_{i}^{\mathrm{pos}})^\top, (\boldsymbol{u}_{i}^{\mathrm{ori}})^\top]^\top \in \mathbb{R}^6$.
Note that for agents without direct reference access ($b_i=0$), the reference trajectory information propagates through the network via the coupling terms. 
Therefore, to ensure global convergence, the communication graph $\mathcal{G}$ contains a spanning tree~\cite{Olfati_TAC_2004_Consensus}.

\subsection{Acceleration Mapping via Damped Least Squares (DLS)}
\label{subsec_acc_mapping}
To realize the high-level task-space acceleration command $\boldsymbol{u}_{i}$ on the physical manipulator, we must invert the second-order differential kinematics relation \eqref{eq:task_acc}.
Direct inversion of the Jacobian matrix $\boldsymbol{J}_i(\boldsymbol{q}_i)$ is prone to numerical instability near kinematic singularities, which can result in excessive joint accelerations and torque demands. 
To address this, we formulate the inverse kinematics problem as a Tikhonov-regularized optimization that balances task tracking accuracy against the magnitude of joint acceleration
\begin{align}
\ddot{\boldsymbol{q}}_{i}^{\mathrm{task}}
= \arg\min_{\ddot{\boldsymbol{q}} \in \mathbb{R}^{n}}
\Big(& \frac{1}{2} \|\boldsymbol{J}_i(\boldsymbol{q}_i)\ddot{\boldsymbol{q}} - (\boldsymbol{u}_{i} - \dot{\boldsymbol{J}}_i(\boldsymbol{q}_i, \dot{\boldsymbol{q}}_i)\dot{\boldsymbol{q}}_i)\|_2^2 \nonumber\\
&+ \frac{\lambda^2}{2} \|\ddot{\boldsymbol{q}}\|_2^2 \Big),
\label{eq:acc_dls_opt}
\end{align}
where $\lambda > 0$ is a damping factor. 
The closed-form solution to this problem provides the robust nominal joint acceleration command
\begin{equation}
\ddot{\boldsymbol{q}}_{i}^{\mathrm{task}} = \boldsymbol{J}_i^{\dagger}(\boldsymbol{q}_i) \left( \boldsymbol{u}_{i} - \dot{\boldsymbol{J}}_i(\boldsymbol{q}_i, \dot{\boldsymbol{q}}_i)\dot{\boldsymbol{q}}_i \right),
\label{eq:acc_dls_sol}
\end{equation}
with the damped pseudoinverse defined as
\begin{equation}
\boldsymbol{J}_i^{\dagger} \coloneqq \boldsymbol{J}_i^\top (\boldsymbol{q}_i) \left( \boldsymbol{J}_i(\boldsymbol{q}_i) \boldsymbol{J}_i^\top(\boldsymbol{q}_i) + \lambda^2 \boldsymbol{I} \right)^{-1}.
\end{equation}
This formulation ensures that the generated joint acceleration command remains bounded even when the manipulator passes through or near singular configurations. The term $\dot{\boldsymbol{J}}_i\dot{\boldsymbol{q}}_i$ compensates for the Coriolis and centrifugal effects in the task space, essential for accurate trajectory tracking in dynamic motions.
Finally, the nominal command $\ddot{\boldsymbol{q}}_{i}^{\mathrm{nom}} = \ddot{\boldsymbol{q}}_{i}^{\mathrm{task}} + \boldsymbol{N}_i(\boldsymbol{q}_i)\boldsymbol{\eta}_i$ is passed to the safety filter to enforce strict joint limits and collision constraints, which is detailed in \cref{subsec_safety_architecture}.

\subsection{Hierarchical Event-Triggered Safety Architecture}
\label{subsec_safety_architecture}
\subsubsection{Event-Triggered Environmental Safety}
\label{subsubsec_env_safety}
For manipulators with $n>6$ degrees of freedom, the forward kinematics map $\boldsymbol{f}_i:\mathbb{R}^n\to\mathbb{R}^3
\times\mathrm{SO}(3)$ is not injective.
Consequently, the same EE pose $(\boldsymbol{p}_i, \boldsymbol{R}_i)$ can be realizable by infinitely many joint configurations, which renders the IK map ill-defined without  additional constraints.
This non-uniqueness poses a fundamental difficulty for the leader-centric safety framework: the active leader computes the environmental safety constraint on behalf of the entire formation using only task-space states.
This consensus fails if agents reside on different branches of the self-motion manifold.

To resolve this, we introduce a null-space regulation law that ensures each agent to a common nominal branch of the self-motion manifold throughout task execution. 
Specifically, the auxiliary joint acceleration is set as
\begin{equation}
    \boldsymbol{\eta}_i = k_{\mathcal{N}}
    \left(\boldsymbol{q}_i^{\mathrm{nom}} 
    - \boldsymbol{q}_i\right),
    \quad k_{\mathcal{N}} \in \mathbb{R}_{>0},
    \label{eq:null_space_law}
\end{equation}
where $\boldsymbol{q}_i^{\mathrm{nom}} \in \mathbb{R}^n$ is a shared nominal joint configuration satisfying $\boldsymbol{f}_i(\boldsymbol{q}_i^{\mathrm{nom}}) = (\boldsymbol{p}_i^{\mathrm{init}}, \boldsymbol{R}_0)$.
Since $\boldsymbol{\eta}_i$ enters~\eqref{eq:inverse_acc} only through the null-space projector $\boldsymbol{N}_i$, it has no effect on the task-space dynamics~\eqref{eq:double_integrator_model} and is therefore fully decoupled from the consensus protocol in ~\cref{eq:consensus_law_pos,eq:consensus_law_ori}.

\begin{assumption}[Orientation \& Null-Space Consensus]
\label{asm:fixed_orient}
The manipulation task commences only after the orientation 
consensus law~\eqref{eq:consensus_law_ori} and the 
null-space regulation law~\eqref{eq:null_space_law} have 
jointly driven all agents to practical synchronization. 
Formally, there exists a finite time $T_0\ge 0$ such that 
for all $t\ge T_0$ and all $i\in\mathcal{V}$:
\begin{align}
  &\|\tilde{\boldsymbol{\theta}}_{0i}(t)\|
    \le \epsilon_\theta,
  \quad
  \|\boldsymbol{\omega}_i(t)-\boldsymbol{\omega}_0(t)\|
    \le \epsilon_\omega,
  \nonumber\\
  &\|\boldsymbol{N}_i(\boldsymbol{q}_i)
   \left(\boldsymbol{q}_i(t)
   -\boldsymbol{q}_i^{\mathrm{nom}}\right)\|
    \le \epsilon_{\mathrm{null}},
  \label{eq:consensus_bounds}
\end{align}
where $\epsilon_\theta,\epsilon_\omega,
\epsilon_{\mathrm{null}}\in\mathbb{R}_{>0}$ are 
prescribed tolerance bounds.
\end{assumption}

\cref{asm:fixed_orient} is practically natural in cooperative manipulation, where all agents are commanded to a common pre-grasp configuration $\boldsymbol{q}_i^{\mathrm{nom}}$ in multi-arm task planning. The null-space 
regulation~\eqref{eq:null_space_law} operates purely in the joint space without affecting the EE pose, and can therefore be executed independently by each agent without inter-agent communication.
In the limiting case $\epsilon_\theta=\epsilon_\omega=\epsilon_{\mathrm{null}}=0$, the IK map is locally unique in a neighborhood of $\boldsymbol{q}_i^{\mathrm{nom}}$ away from kinematic singularities, by the Implicit Function Theorem.

Under \cref{asm:fixed_orient}, all agents maintain a 
common nominal branch of the self-motion manifold, 
ensuring that the sphere centers 
$\boldsymbol{c}_{i,k}(\boldsymbol{q}_i)$ are 
kinematically consistent across the formation.
This allows the active leader to evaluate the 
environmental safety constraint on behalf of the entire 
formation.
For each agent, the arm geometry is 
over-approximated by $K_i\in\mathbb{N}$ bounding spheres 
covering all link bodies, with centers
$\boldsymbol{c}_{i,k}(\boldsymbol{q}_i)\in\mathbb{R}^3$, 
$k\in\{1,\dots,K_i\}$, computed via forward kinematics 
and fixed radii $r_{i,k}\in\mathbb{R}_{>0}$.
The signed distance from sphere $(i,k)$ to an obstacle 
point $\boldsymbol{o}\in\mathcal{O}$ is
\begin{equation}
  \phi_{i,k}(\boldsymbol{q}_i,\boldsymbol{o})
  \coloneqq
  \|\boldsymbol{c}_{i,k}(\boldsymbol{q}_i)
    -\boldsymbol{o}\|
  - r_{i,k},
  \label{eq:phi_ik}
\end{equation}
with $\phi_{i,k}\ge 0$ being necessary and sufficient 
for sphere $(i,k)$ to be obstacle-free.

\begin{assumption}[Compact Formation Bound]
\label{asm:compact_form}
There exists a constant 
$\bar{R}_{\mathrm{form}}\in\mathbb{R}_{>0}$ such that 
for all $t\ge T_0$, all $j\in\mathcal{V}$, and all 
$k\in\{1,\dots,K_j\}$,
\begin{equation}
  \min_{k'\in\{1,\dots,K_\ell\}}
  \Big(
    \|\boldsymbol{c}_{j,k}(\boldsymbol{q}_j)
      -\boldsymbol{c}_{\ell,k'}(\boldsymbol{q}_\ell)\|
    - r_{\ell,k'}
    + r_{j,k}
  \Big)
  \le \bar{R}_{\mathrm{form}},
  \label{eq:compact_form}
\end{equation}
where $\boldsymbol{c}_{\ell,k'}$ are the sphere centers 
of the active leader $\ell(t)$, and 
$\bar{R}_{\mathrm{form}}$ can be computed offline from 
the maximum inter-agent separation and the manipulator 
kinematics.
\end{assumption}
\cref{asm:compact_form} states that the entire physical 
volume of every agent (the union of all bounding spheres) 
is enclosed within a bounded envelope around the leader 
body.
It is satisfied by design for compact cooperative 
manipulation formations with bounded inter-agent 
distances enforced by the consensus 
protocol~\eqref{eq:consensus_law_pos}, and is 
consistent with \cref{asm:fixed_orient} which ensures 
kinematic predictability of the sphere centers.

Each agent computes its minimum body clearance:
\begin{equation}
  d_i^{\mathrm{body}}(t)=
  \min_{\boldsymbol{o}\in\mathcal{O}}\,
  \min_{k\in\{1,\dots,K_i\}}
  \phi_{i,k}(\boldsymbol{q}_i(t),\boldsymbol{o})
  \label{eq:d_body}
\end{equation}
using its own joint state and local obstacle 
measurements, and broadcasts the scalar 
$d_i^{\mathrm{body}}$ to all agents over $\mathcal{G}$.
The active leader is the agent with the smallest 
body clearance:
\begin{equation}
  \ell(t)
  = \operatorname*{argmin}_{i\in\mathcal{V}}
    d_i^{\mathrm{body}}(t),
  \label{eq:leader_selection}
\end{equation}
with ties broken by lexicographic order on agent index.
Each agent requires only a single scalar broadcast per 
control step to evaluate~\eqref{eq:leader_selection}.

\begin{lemma}[Leader-to-Formation Safety Transfer]
\label{lem:leader_transfer}
Under \cref{asm:compact_form}, for any 
$\boldsymbol{o}\in\mathcal{O}$, any $j\in\mathcal{V}$, 
and any $k\in\{1,\dots,K_j\}$,
\begin{equation}
  \phi_{j,k}(\boldsymbol{q}_j,\boldsymbol{o})
  \;\ge\;
  d_\ell^{\mathrm{body}}(t) - \bar{R}_{\mathrm{form}}.
  \label{eq:phi_lower_bound}
\end{equation}
Consequently, $d_\ell^{\mathrm{body}}(t)\ge
\bar{R}_{\mathrm{form}}$ implies $\phi_{j,k}\ge 0$ 
for all $(j,k,\boldsymbol{o})$.
\end{lemma}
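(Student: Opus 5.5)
The plan is a triangle-inequality argument that transfers the leader's clearance to an arbitrary sphere of an arbitrary agent through the leader sphere nearest to it. Fix $\boldsymbol{o}\in\mathcal{O}$, $j\in\mathcal{V}$ and $k\in\{1,\dots,K_j\}$.

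First, let $k^\star$ be a minimizing index in \eqref{eq:compact_form}. The minimum is over a finite set, so it is attained, and \cref{asm:compact_form} gives
\begin{equation*}
\|\boldsymbol{c}_{j,k}-\boldsymbol{c}_{\ell,k^\star}\| - r_{\ell,k^\star} + r_{j,k} \le \bar{R}_{\mathrm{form}}.
\end{equation*}

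Second, apply the reverse triangle inequality in $\mathbb{R}^3$:
\begin{equation*}
\|\boldsymbol{c}_{j,k}-\boldsymbol{o}\| \ge \|\boldsymbol{c}_{\ell,k^\star}-\boldsymbol{o}\| - \|\boldsymbol{c}_{j,k}-\boldsymbol{c}_{\ell,k^\star}\|.
\end{equation*}
Subtracting $r_{j,k}$ from both sides and adding and subtracting $r_{\ell,k^\star}$ on the right yields
\begin{equation*}
\phi_{j,k}(\boldsymbol{q}_j,\boldsymbol{o}) \ge \phi_{\ell,k^\star}(\boldsymbol{q}_\ell,\boldsymbol{o}) - \bigl(\|\boldsymbol{c}_{j,k}-\boldsymbol{c}_{\ell,k^\star}\| - r_{\ell,k^\star} + r_{j,k}\bigr).
\end{equation*}
The bracketed term is at most $\bar{R}_{\mathrm{form}}$ by the first step.

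Third, by the definition \eqref{eq:d_body}, $\phi_{\ell,k^\star}(\boldsymbol{q}_\ell,\boldsymbol{o})\ge d_\ell^{\mathrm{body}}(t)$, since $d_\ell^{\mathrm{body}}$ is the minimum of $\phi_{\ell,k'}$ over all $k'$ and all obstacle points. Chaining the inequalities gives \eqref{eq:phi_lower_bound}. The "consequently" part then follows immediately, because $d_\ell^{\mathrm{body}}\ge\bar{R}_{\mathrm{form}}$ makes the right-hand side nonnegative.

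The main point of care is bookkeeping rather than difficulty. The radii must be signed correctly so that the combination $-r_{\ell,k'}+r_{j,k}$ in \eqref{eq:compact_form} matches exactly what the triangle inequality produces. The minimizing leader sphere $k^\star$ must also be chosen independently of $\boldsymbol{o}$, which is fine because \eqref{eq:compact_form} does not involve $\boldsymbol{o}$. Finally, the bound applies for $t\ge T_0$, which is the range on which \cref{asm:compact_form} holds, with $\ell=\ell(t)$. The leader-selection rule \eqref{eq:leader_selection} is not actually needed for the inequality itself, which holds for any agent designated as $\ell$; it only makes the bound tight in the sense that $d_\ell^{\mathrm{body}}$ is the smallest clearance.
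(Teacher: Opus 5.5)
Your proof is correct and follows the same triangle-inequality chain as the paper. However, you choose the intermediate leader sphere differently, and the difference matters.

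The paper takes $k^*=\operatorname{argmin}_{k'}\phi_{\ell,k'}(\boldsymbol{q}_\ell,\boldsymbol{o})$, the leader sphere nearest to the obstacle point. It then bounds $\|\boldsymbol{c}_{j,k}-\boldsymbol{c}_{\ell,k^*}\|-r_{\ell,k^*}+r_{j,k}$ by $\bar{R}_{\mathrm{form}}$, citing \eqref{eq:compact_form}. But \cref{asm:compact_form} only bounds the \emph{minimum} of this bracket over $k'$. So that step is not justified for an obstacle-dependent index: a leader sphere close to $\boldsymbol{o}$ can be far from $\boldsymbol{c}_{j,k}$.

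You instead take $k^\star$ to be the minimizer in \eqref{eq:compact_form}, which makes the appeal to the assumption exact. You lose nothing on the other side, because $\phi_{\ell,k^\star}(\boldsymbol{q}_\ell,\boldsymbol{o})\ge d_\ell^{\mathrm{body}}(t)$ holds for every leader index, $d_\ell^{\mathrm{body}}$ being a minimum over all spheres and all obstacle points. Your version is therefore the repaired form of the paper's argument.

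Your side remarks are also accurate: the bound is available only for $t\ge T_0$, and the argmin leader-selection rule plays no role in the inequality itself.
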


\begin{proof}
Fix an obstacle point $\boldsymbol{o}\in\mathcal{O}$, an agent $j\in\mathcal{V}$, and a sphere index $k$, and let $k^*\coloneqq\operatorname{argmin}_{k'}\phi_{\ell,k'}(\boldsymbol{q}_\ell,\boldsymbol{o})$ be the active-leader sphere index. By the triangle inequality, $\|\boldsymbol{c}_{j,k}-\boldsymbol{o}\|\ge\|\boldsymbol{c}_{\ell,k^*}-\boldsymbol{o}\|-\|\boldsymbol{c}_{j,k}-\boldsymbol{c}_{\ell,k^*}\|$. Subtracting $r_{j,k}$ and adding and subtracting $r_{\ell,k^*}$, the signed-distance definition~\eqref{eq:phi_ik} gives
\begin{align*}
  \phi_{j,k}
  &\ge
  \underbrace{\big(\|\boldsymbol{c}_{\ell,k^*}-\boldsymbol{o}\|-r_{\ell,k^*}\big)}_{=\,\phi_{\ell,k^*}(\boldsymbol{q}_\ell,\boldsymbol{o})}
  -
  \underbrace{\big(\|\boldsymbol{c}_{j,k}-\boldsymbol{c}_{\ell,k^*}\|-r_{\ell,k^*}+r_{j,k}\big)}_{\le\,\bar{R}_{\mathrm{form}}\text{ by }\eqref{eq:compact_form}}.
\end{align*}
Since $\phi_{\ell,k^*}(\boldsymbol{q}_\ell,\boldsymbol{o})\ge d_\ell^{\mathrm{body}}(t)$ by the minimum-clearance definition~\eqref{eq:d_body}, it follows that $\phi_{j,k}\ge d_\ell^{\mathrm{body}}(t)-\bar{R}_{\mathrm{form}}$, which concludes the proof.
\end{proof}


By \cref{lem:leader_transfer}, we design the CBF for the active leader for environmental safety as follows
\begin{equation}
  h_\ell^{\mathrm{env}}(\boldsymbol{q}_\ell)
=
  d_\ell^{\mathrm{body}}(t)
  - d_{\mathrm{margin}}^{\mathrm{env}}
  \ge 0,
  \label{eq:h_env}
\end{equation}
with the safety margin chosen, so that $d_{\mathrm{margin}}^{\mathrm{env}} > \bar{R}_{\mathrm{form}} + \epsilon_{\mathrm{sens}}$, where $\epsilon_{\mathrm{sens}}\ge 0$ bounds obstacle sensing uncertainty; this design condition is instantiated in the trigger parameters~\eqref{eq:env_param}.

Since $d_\ell^{\mathrm{body}}$ involves a nested 
minimum over $(k,\boldsymbol{o})$, it is generally 
non-differentiable. We apply the active-index method, i.e., at each time instant, identify the minimizing pair
\begin{equation}
  (k^*,\boldsymbol{o}^*)(t)
  \coloneqq
  \operatorname*{argmin}_{
    k\in\{1,\dots,K_\ell\},\,
    \boldsymbol{o}\in\mathcal{O}}
  \phi_{\ell,k}(\boldsymbol{q}_\ell,\boldsymbol{o}),
  \label{eq:active_index}
\end{equation}
breaking ties by a fixed deterministic rule.
At non-smooth instants, Clarke subgradients from $\partial h_\ell^{\mathrm{env}}$ are used, and forward invariance of $\mathcal{S}^{\mathrm{env}}=\{\boldsymbol{q}_\ell:h_\ell^{\mathrm{env}}\ge 0\}$ is preserved by~\cite{Glotfelter_CSL_2017_Nonsmooth}.

Since $h_\ell^{\mathrm{env}}$  in~\eqref{eq:h_env} depends on 
$\boldsymbol{q}_\ell$ and the closed-loop system 
exhibits double-integrator task-space 
dynamics~\eqref{eq:double_integrator_model}, 
$h_\ell^{\mathrm{env}}$ has relative degree two 
with respect to $\boldsymbol{u}_\ell$.
Define
\begin{equation}
  b_\ell^{\mathrm{env}}=
  \dot{h}_\ell^{\mathrm{env}}
  +\alpha_1^{\mathrm{env}}(h_\ell^{\mathrm{env}}),
  \quad\alpha_1^{\mathrm{env}}\in\mathcal{K}_\infty,
  \label{eq:b_env}
\end{equation}
where $\alpha_{1}\in\mathcal{K}_{\infty}$ is a class $\mathcal{K}$-infinity and $\dot{h}_\ell^{\mathrm{env}}
=(\partial h_\ell^{\mathrm{env}}/
  \partial\boldsymbol{q}_\ell)
 \dot{\boldsymbol{q}}_\ell$.
The second-order CBF constraint
\begin{equation}
  \dot{b}_\ell^{\mathrm{env}}
  +\alpha_2^{\mathrm{env}}(b_\ell^{\mathrm{env}})
  \ge 0,
  \quad\alpha_2^{\mathrm{env}}\in\mathcal{K}_{\infty},
  \label{eq:cbf_env}
\end{equation}
where $\alpha_{2}\in\mathcal{K}_{\infty}$, is appended to the leader's QP if and only if 
$\mathcal{E}_\ell^{\mathrm{env}}(t)=1$.
Let $d_{\mathrm{alert}}^{\mathrm{env}}>
d_{\mathrm{margin}}^{\mathrm{env}}$ and 
$\delta_{\mathrm{hyst}}^{\mathrm{env}}>0$ be the 
alert distance and hysteresis margin, respectively. 
Define the scalar trigger function
\begin{equation}
  f_\ell^{\mathrm{env}}(t)
  \coloneqq
  d_\ell^{\mathrm{body}}(t)
  - d_{\mathrm{alert}}^{\mathrm{env}}.
  \label{eq:f_env}
\end{equation}
The trigger function is designed as 
\begin{equation*}
  \mathcal{E}_\ell^{\mathrm{env}}(t^+) =
  \begin{cases}
    1, & \mathcal{E}_\ell^{\mathrm{env}}(t)=0
         \;\wedge\; f_\ell^{\mathrm{env}}(t)\le 0,
         \\[4pt]
    0, & \mathcal{E}_\ell^{\mathrm{env}}(t)=1
         \;\wedge\; f_\ell^{\mathrm{env}}(t)
           \ge\delta_{\mathrm{hyst}}^{\mathrm{env}},
         \\[4pt]
    \mathcal{E}_\ell^{\mathrm{env}}(t),
       & \text{otherwise,}
  \end{cases}
  \label{eq:env_trigger}
\end{equation*}
where $\delta_{\mathrm{hyst}}^{\mathrm{env}}$ prevents 
chattering near the alert boundary.
The design parameters satisfy
\begin{equation}
  d_{\mathrm{alert}}^{\mathrm{env}}
  \ge d_{\mathrm{margin}}^{\mathrm{env}}
  > \bar{R}_{\mathrm{form}}+\epsilon_{\mathrm{sens}},
  \label{eq:env_param}
\end{equation}
ensuring the trigger fires \emph{before} 
$h_\ell^{\mathrm{env}}$ becomes negative, thereby 
guaranteeing feasibility of~\eqref{eq:cbf_env} 
upon activation.
Leader switches occur when~\eqref{eq:leader_selection} 
changes value. At each switch instant $t_s$, the 
incoming leader $\ell^+$ must satisfy
\begin{equation}
  h_{\ell^+}^{\mathrm{env}}
  \!\left(\boldsymbol{q}_{\ell^+}(t_s)\right)\ge 0,
  \label{eq:switch_feasibility}
\end{equation}
ensuring constraint~\eqref{eq:cbf_env} is initially 
satisfiable for the new leader.
The trigger state $\mathcal{E}_{\ell^+}^{\mathrm{env}}$ 
is reset to zero and immediately re-evaluated using 
$\boldsymbol{q}_{\ell^+}(t_s)$.
Stability under switching follows from the dwell time 
arguments of~\cite{liberzon2003switching}, provided 
the minimum dwell time $\tau_{\min}>\tau^*$, where 
$\tau^*>0$ is determined by the CBF decay rate 
$\alpha_1^{\mathrm{env}}$. 
The necessary condition of $\tau^*$ is not considered in this paper and is left for future work.

\subsubsection{Event-Triggered Inter-Agent Interaction Safety}
\label{subsubsec_inter_safety}

Inter-agent safety is enforced by ensuring that the
EE of every agent $i$ maintains a safe distance
from the EEs of all neighboring agents
$j\in\mathcal{N}_i$.
For each pair $(i,j)$ with $i<j$, define the pairwise
EE distance barrier
\begin{equation}
  h_{ij}^{\mathrm{inter}}
  \coloneqq
  \|\boldsymbol{p}_i(\boldsymbol{q}_i)
    - \boldsymbol{p}_j(\boldsymbol{q}_j)\|
  - d_{\mathrm{margin}}^{\mathrm{inter}},
  \label{eq:h_inter}
\end{equation}
where $d_{\mathrm{margin}}^{\mathrm{inter}}>0$ is the
minimum allowable EE separation.
Since $h_{ij}^{\mathrm{inter}}$ depends on both
$\boldsymbol{q}_i$ and $\boldsymbol{q}_j$, it has relative
degree two w.r.t. the joint accelerations of both agents.
Let $d_{\mathrm{alert}}^{\mathrm{inter}}>0$ be an alert
horizon and $\delta_{\mathrm{hyst}}^{\mathrm{inter}}>0$
a hysteresis width.
Define the pairwise trigger function
\begin{equation}
  g_{ij}(t)
  \coloneqq
  \|\boldsymbol{p}_i(t)-\boldsymbol{p}_j(t)\|
  - d_{\mathrm{alert}}^{\mathrm{inter}},
  \label{eq:g_inter}
\end{equation}
and the symmetric trigger function is
\begin{equation*}
  \mathcal{E}_{ij}^{\mathrm{inter}}(t^+) =
  \begin{cases}
    1, & \mathcal{E}_{ij}^{\mathrm{inter}}(t)=0
         \;\wedge\; g_{ij}(t)\le 0,\\[4pt]
    0, & \mathcal{E}_{ij}^{\mathrm{inter}}(t)=1
         \;\wedge\; g_{ij}(t)\ge
           \delta_{\mathrm{hyst}}^{\mathrm{inter}},\\
    \mathcal{E}_{ij}^{\mathrm{inter}}(t),
         & \text{otherwise,}
  \end{cases}
  \label{eq:inter_trigger}
\end{equation*}
which is evaluated identically by both agents $i$ and $j$
given that $g_{ij}=g_{ji}$.
The design parameters satisfy
\begin{equation}
  d_{\mathrm{alert}}^{\mathrm{inter}}
  \ge d_{\mathrm{margin}}^{\mathrm{inter}}
  > 0,
  \label{eq:inter_margin}
\end{equation}
ensuring the constraint is activated before the barrier becomes negative.
When $\mathcal{E}_{ij}^{\mathrm{inter}}=0$, no state
information need be exchanged over edge $(i,j)$, since the
constraint is inactive.
Communication is required only when two agents are in close
proximity, substantially reducing network load in
sparsely occupied workspaces.

\subsubsection{Intrinsic Local Safety}
\label{subsubsec_local_safety}

Each agent $i\in\mathcal{V}$ enforces the following 
intrinsic safety constraints at every control step,
independently of the event-triggered mechanism.

\begin{itemize}

\item {Joint position limits:}
For each joint $k$, define
$h_{i,k}^{q,+} \coloneqq q_{i,k} - q_{\min,i,k}$
and
$h_{i,k}^{q,-} \coloneqq q_{\max,i,k} - q_{i,k}$.
Both have relative degree two w.r.t.\
$\ddot{\boldsymbol{q}}_i$ and are enforced via 
the HO-CBF condition
$\dot{b}_{i,k}^{q,\pm}+\alpha_2(b_{i,k}^{q,\pm})
\ge 0$,
where $b_{i,k}^{q,\pm}\coloneqq
\dot{h}_{i,k}^{q,\pm}+\alpha_1(h_{i,k}^{q,\pm})$,
$\alpha_1,\alpha_2\in\mathcal{K}_\infty$.

\item {Joint velocity limits:}
Define $h_i^{v}\coloneqq
\dot{q}_{\max,i}^{2}-\|\dot{\boldsymbol{q}}_i\|^{2}$.
Since $h_i^v$ has relative degree one, it is 
enforced by the standard CBF condition
$\dot{h}_i^{v}+\alpha_1(h_i^{v})\ge 0$.

\item {Torque limits:}
The constraint
$\boldsymbol{\tau}_{\min,i}\le\boldsymbol{\tau}_i
\le\boldsymbol{\tau}_{\max,i}$
is imposed as a box constraint on the QP decision 
variable via the inverse dynamics 
law~\eqref{eq:inverse_dynamics_ctrl}.

\item {Self-collision avoidance:}
Let $d_i^{\mathrm{self}}(\boldsymbol{q}_i)$ 
denote the minimum distance between any 
non-adjacent link pair of agent $i$, and let
$\delta^{\mathrm{self}}\in\mathbb{R}_{>0}$ be a 
prescribed safety margin.
Define
$h_i^{\mathrm{sc}}\coloneqq
d_i^{\mathrm{self}}(\boldsymbol{q}_i)
-\delta^{\mathrm{self}}$,
enforced via the same HO-CBF condition as the 
joint position barriers, using Clarke subgradients 
at non-smooth 
instants~\cite{Glotfelter_CSL_2017_Nonsmooth}.
\end{itemize}
For every position-dependent barrier 
$h_{i,k}(\boldsymbol{q}_i)$, the HO-CBF condition 
reduces to the affine constraint in 
$\ddot{\boldsymbol{q}}_i$
\begin{equation}
  \frac{\partial h_{i,k}}{\partial\boldsymbol{q}_i}
  \ddot{\boldsymbol{q}}_i
  \ge
  -\frac{\partial\dot{h}_{i,k}}
        {\partial\boldsymbol{q}_i}
  \dot{\boldsymbol{q}}_i
  - \alpha_1'(h_{i,k})\dot{h}_{i,k}
  - \alpha_2(b_{i,k}),
  \label{eq:cbf_affine}
\end{equation}
which is the form directly appended to the unified 
QP in \cref{subsubsec_unified_qp}.

\subsubsection{Unified QP Safety Filter}
\label{subsubsec_unified_qp}

At each control step, every agent $i\in\mathcal{V}$ 
computes a safe joint acceleration by solving the 
following quadratic program:
\begin{equation}
  \ddot{\boldsymbol{q}}_{\mathrm{safe},i}
  =
  \operatorname*{arg\,min}_{
    \boldsymbol{z}\in\mathcal{Z}_i^{\tau}}
  \;\tfrac{1}{2}
  \left\|\boldsymbol{z}
    - \ddot{\boldsymbol{q}}^{\mathrm{nom}}_{i}
  \right\|^2
  \label{eq:safety_qp}
\end{equation}
\begin{equation*}
  \text{s.t.}\quad
  \dot{b}_{i,k}(\boldsymbol{q}_i,
    \dot{\boldsymbol{q}}_i,\boldsymbol{z})
  + \alpha_2(b_{i,k}) \ge 0,
~
  \forall\,k\in
  \mathcal{K}_{\mathrm{int},i}
  \cup
  \mathcal{K}_{\mathrm{active},i}(t),
\end{equation*}
where $\mathcal{Z}_i^{\tau}$ is the torque-feasible set defined the torque limits, $\mathcal{K}_{\mathrm{int},i}$ is the 
    always-active intrinsic constraint index set defined in \cref{subsubsec_local_safety},
 $\mathcal{K}_{\mathrm{active},i}(t)$ is the 
    event-triggered active constraint index set, 
    assembled at each control step from the 
    trigger state machines defined 
    in \cref{eq:env_trigger} and the inter-agent 
    safety triggers in \cref{eq:inter_trigger}.
Each constraint in~\eqref{eq:safety_qp} is affine in 
$\boldsymbol{z}$, since
\begin{equation}
  \dot{b}_{i,k}
  =
  {
    \frac{\partial h_{i,k}}{\partial\boldsymbol{q}_i}
  }
  \boldsymbol{z}
  +
  {
    \frac{\partial\dot{h}_{i,k}}
         {\partial\boldsymbol{q}_i}
    \dot{\boldsymbol{q}}_i
  },
  \label{eq:bdot_affine}
\end{equation}
and thus~\eqref{eq:safety_qp} is a strictly convex QP.

Since $d_{\mathrm{alert}}^{\mathrm{env}}
\ge d_{\mathrm{margin}}^{\mathrm{env}}$ 
by~\eqref{eq:env_param}, each event-triggered CBF 
constraint is activated only while the corresponding 
barrier $h_{i,k}>0$, i.e., strictly inside the safe 
set.
At such instants, $\boldsymbol{z}=
\ddot{\boldsymbol{q}}^{\mathrm{nom}}_{i}$ 
is a feasible point whenever the nominal input 
satisfies the barrier condition, which holds by 
construction of the alert threshold.
If needed, feasibility can be universally guaranteed 
by augmenting~\eqref{eq:safety_qp} with a slack 
variable $\delta_i\ge 0$ and a penalty term 
$p\,\delta_i^2$ in the objective.

\section{EXPERIMENTS AND RESULTS}
In this section, we evaluate four methods in both real-world experiments and simulations under identical conditions and desired references for quantitative analysis: Ours denotes the proposed HET-CBF framework; Distributed CBF (D-CBF)~\cite{Mestres_RAL_2024_Distributed} refers to a baseline where all agents independently solve a CBF-QP safety filter at every control update to enforce both obstacle avoidance and inter-arm collision avoidance; Centralized NMPC~\cite{Hu_IROS_2021_NMPC} is a centralized nonlinear model predictive controller that optimizes the nominal control sequence; Centralized MPPI~\cite{williams_arxiv_2015_modelpredictivepathintegral} is a centralized model predictive path integral controller that generates the nominal control sequence.

\subsection{Real World Dual-arm Task}
\label{subsec_realworld}
\subsubsection{Experimental Description}
The real-world platform consists of two Franka Emika Panda manipulators, each rigidly grasping one end of a transported link to emulate a cooperative transportation scenario. 
Notably, to ensure collision avoidance of the transported object, each agent treats its end-effector and the proximal half of the link as a unified rigid body, enclosed by a single bounding capsule for safety constraint evaluation. 
The formation is regulated by a fixed desired relative pose $(\boldsymbol{d}_{12}^{\mathrm{des}}, \tilde{\boldsymbol{\theta}}_{12}^{\mathrm{des}})$ between the two EEs, where $\boldsymbol{d}_{12}^{\mathrm{des}}=[0,\ 0.50,\ 0]^\top$~m and $\tilde{\boldsymbol{\theta}}_{12}^{\mathrm{des}} =[0.32,-0.32,1.95]^\top$ rad. 
The minimal dwell time $\tau_{\min}=0.15$~s,  and the hysteresis margin is set to {$\delta^{env}_{\mathrm{hyst}} = 2\times10^{-4}$~m}.
The consensus control gains are set as $k_p=3, k_d=1, k_\theta =3, k_\omega=0.5$. 
The joint-velocity limit $\dot{q}_{\max}=2$~rad/s, a DLS factor $\lambda=0.05$, and the safety margin $d_{\mathrm{safe}}=0.01$~m. 
The position and torque limits and self collision configuration are set as the same physical limitation of Franka arms.
The task-space reference is defined by the position center of the a transported link $\boldsymbol{p}_c = 1/2(\boldsymbol{p}_1 + \boldsymbol{p}_2)$, whose the target position is set as $\boldsymbol{p}_g = [0.45,0.0,0.3]$. It is generated by a straight-line trajectory in task space obtained by linearly interpolating from the initial $\boldsymbol{p}_c = [0.24, 0.00, 0.63]^\top$ to $\boldsymbol{p}_g$ and only sent to the active leader defined by $\ell(t)$.

\subsubsection{Results}
In \cref{fig_consensus_error}, we compare the consensus position error defined as $E_p=1/N\sum_{i,j \in \mathcal{V}}\|\boldsymbol{p}_j(t) - \boldsymbol{p}_i(t) - \boldsymbol{d}_{ij} \|$ and rotation error denoted by $E_{\theta}= 1/N\sum_{i,j \in \mathcal{V}}\|\tilde{\boldsymbol{\theta}}_{ij} \|$ across four methods. 
The shaded intervals indicate which leader is active under the proposed HET-CBF. HET-CBF achieves the smallest $E_p$ for most of the time, staying below 0.005. 
In contrast, NMPC settles at a relatively large steady-state error, MPPI shows pronounced oscillations, and the distributed CBF performance deteriorates markedly after 200 steps.
For the rotation error $E_{\theta}$, HET-CBF achieves the lowest values overall. MPPI also keeps the error at a low level, but it exhibits noticeable chattering. 
Meanwhile, distributed CBF exhibits a peak of about 0.3, and NMPC reduces the error only gradually. 
Notably, under HET-CBF, $E_{\theta}$ increases around 100 and 700 steps due to obstacle avoidance behavior. This is expected because, in the prioritized objective of the control design (cf. \cref{prob_split_coordination}), position consensus is enforced as the primary goal, while orientation tracking is treated as secondary.
\begin{figure}[t]
    \centering
    \includegraphics[width=1\linewidth]{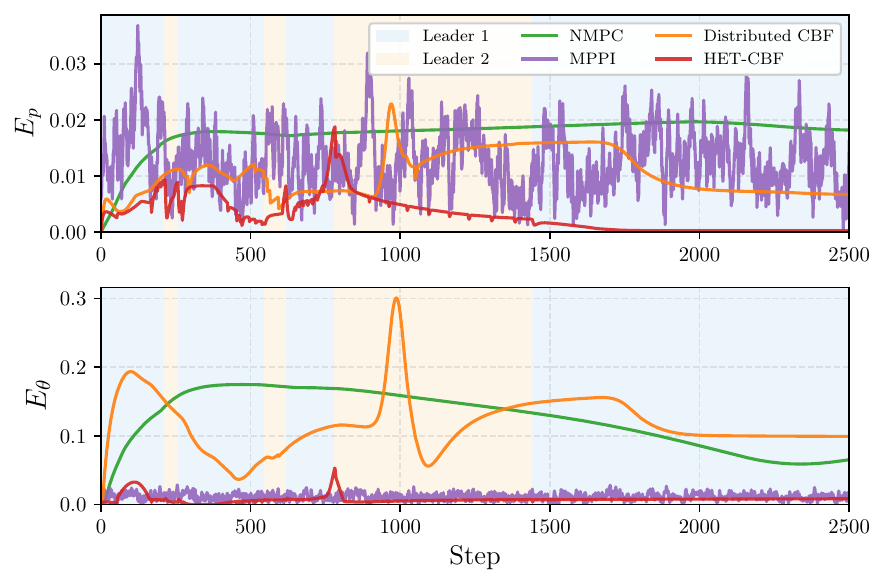}
    \caption{Comparison of formation position error and orientation error across methods and leader active duration of HET-CBF (shaded regions).}
    \label{fig_consensus_error}
\end{figure}

\subsection{Monte Carlo Experiment}
\subsubsection{Experimental Description}
To assess robustness, we perform a Monte Carlo study with $N_{\mathrm{trial}}=20$ trials of two-arm simulations conducted in MuJoCo in a workspace with the same static sphere-based obstacles in \cref{subsec_realworld}. 
Moreover, unless otherwise stated, the remaining controller parameters and robotic constraints are the same as in the real-world experiments in \cref{subsec_realworld}. 
In each trial, we apply small random perturbations to the initial condition, the Cartesian goal position, and the obstacle-sphere centers, and reuse the same randomized trial configuration across all compared methods to enable a paired and fair comparison. 
Specifically, the formation-center configuration is perturbed by $\Delta q_c\sim\mathcal{U}[-0.04,\,0.04]$~m, the goal position is perturbed per axis by $\Delta p_{g}\sim\mathcal{U}[-0.02,\,0.02]$~m, and each obstacle center is perturbed per axis by $\Delta o_k\sim\mathcal{U}[-0.02,\,0.02]$~m.  

\subsubsection{Results}
In terms of tracking accuracy, HET-CBF achieves the lowest aggregated formation position error $E_p$ and orientation error $E_\theta$ as shown in \cref{fig_mc}, outperforming NMPC by nearly an order of magnitude and surpassing both MPPI and D-CBF by a substantial margin. 
Regarding computational efficiency, HET-CBF reduces the per-step solve time to approximately 8 ms over 19$\times$ faster than NMPC and 25$\times$ faster than MPPI, validating the scalability of the event-triggered QP architecture for real-time deployment. 
Furthermore, HET-CBF achieves the shortest task completion time with a median of approximately 5.3 s, while D-CBF incurs the highest latency, demonstrating that the hierarchical leader-switching strategy in HET-CBF not only enforces safety but also preserves task efficiency.
\begin{figure}
    \centering
    \includegraphics[width=1\linewidth]{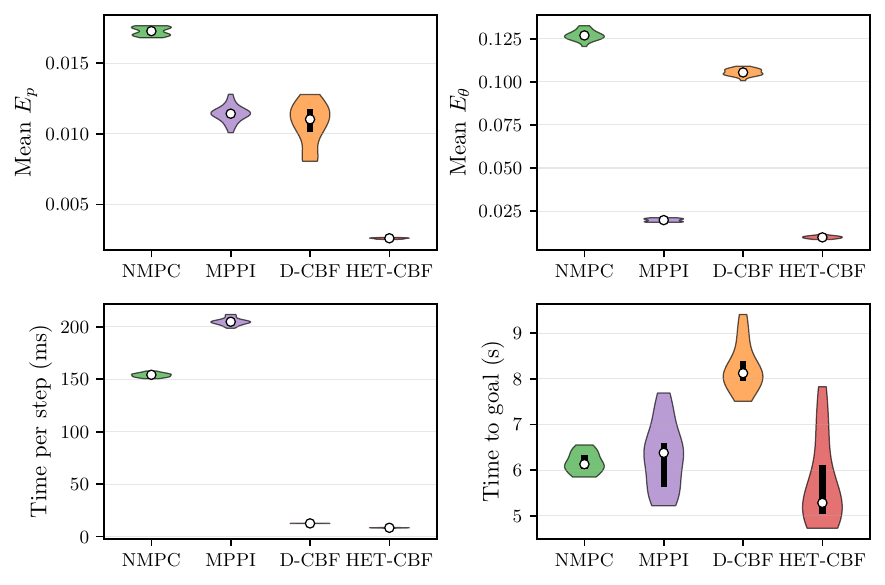}
    \caption{Monte Carlo evaluation comparing formation position error, orientation error, computational time, and task completion time across methods.}
    \label{fig_mc}
\end{figure}

\subsection{Multi-arm Task}
\subsubsection{Experimental Description}
We further evaluate scalability in simulation using a multi-arm cooperative task in MuJoCo. In this scenario, a single dynamic spherical obstacle with an angular velocity of 0.2 rad/s moves along a circular trajectory around the four manipulators, repeatedly approaching the formation and creating time-varying risk. 
In our method, communication follows a ring topology, where $\mathcal{E} = \{(1,2),\,(2,3),\,(3,4),\,(4,1)\}$. 
Unless otherwise stated, the remaining controller parameters are the same as in the real-world experiments in \cref{subsec_realworld}.
The team-level task variable is defined as the formation center $\boldsymbol{p}_c \coloneqq \tfrac{1}{4}\sum_{i=1}^{4}\boldsymbol{p}_i$ = [0.50,0.00,0.62], and the formation is generated by the initial joint space $[0.0, -0.57, 0.0, -2.14, 0.0, 1.57, 0.79]$ and the base positions are set to $\{[0,\,\pm0.50,\,0]^\top,\ [\pm1.00,\,\pm0.50,\,0]^\top\}$. 
The dynamic obstacle follows a circular path with radius $r_{\mathrm{circ}}=0.30$ m, center and angular speed $\omega_{\mathrm{obs}}=0.2$ rad/s.
We first evaluate the method under human-in-the-loop control, where a user teleoperates the active leader using a world-frame end-effector velocity command of $0.6$~m/s, while the remaining agents maintain the formation via distributed consensus. We record the minimum safety barrier value $h_{\min}(t)$, defined as the environmental barrier $h_i^{\mathrm{env}}(t)$ of the arm that is closest to the external spherical obstacle, i.e., $h_{\min}(t)\coloneqq \min_{i\in\mathcal{V}} h_i^{\mathrm{env}}(t)$ during an execution.
Second, we conduct station-keeping experiments in which the multi-arm system maintains its nominal initial pose while avoiding a dynamic spherical obstacle. We quantitatively benchmark the proposed framework against baseline methods in terms of formation accuracy, orientation error, computational cost, and safety compliance (i.e., whether all agents successfully avoid obstacles).

\subsubsection{Results}

\begin{figure}
    \centering
    \includegraphics[width=0.95\linewidth]{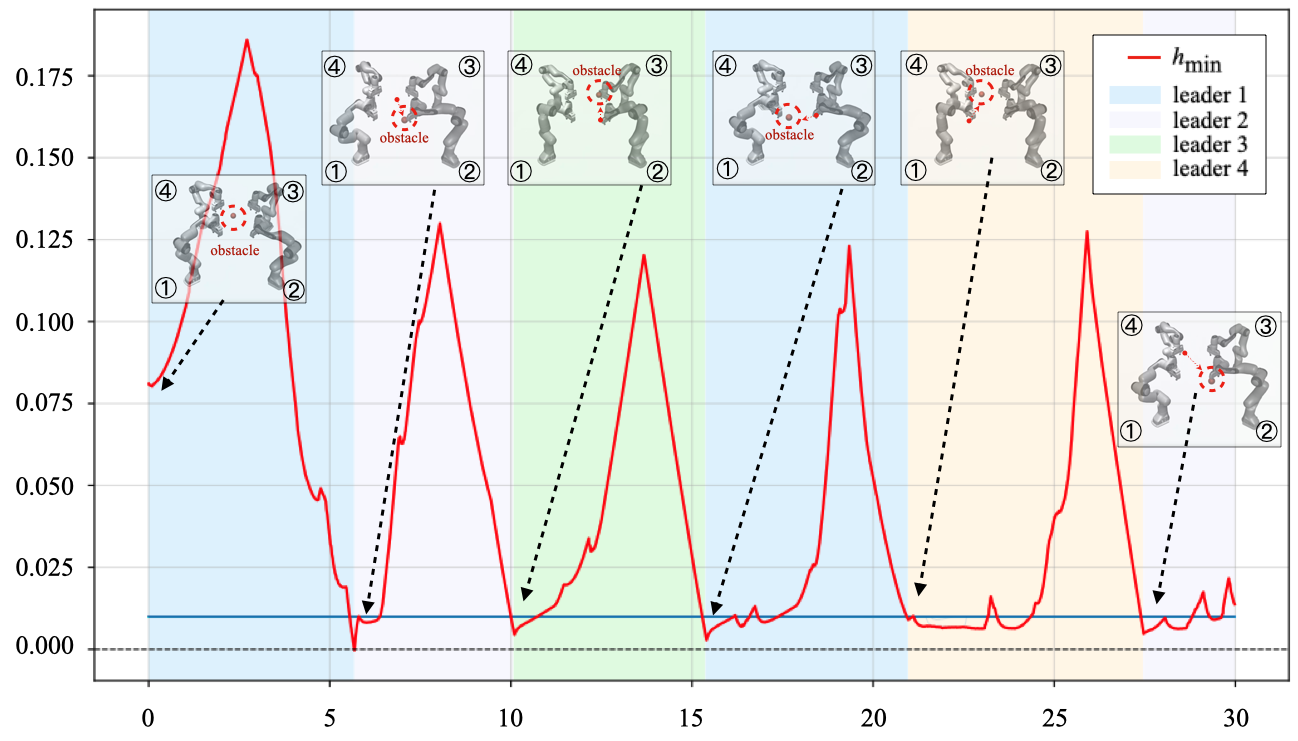}
    \caption{ 
    Visualization of $h_{\min}(t)$ and leader active duration in four-arm scenario.
    }
    \label{fig:enter-label}
\end{figure}

\begin{table}[t]
\centering
\caption{Performance comparison in four-arm simulation.}
\label{tab:four_methods_summary_meanpmrange}
\setlength{\tabcolsep}{4pt}
\renewcommand{\arraystretch}{1.12}
\resizebox{\columnwidth}{!}{%
\begin{tabular}{lcccc}
\toprule
Method & Safety & Time per step (ms) & $E_p$ & $E_{\theta}$  \\
\midrule
HET-CBF   & \cmark & \textbf{3.06$\pm$0.51}   & \textbf{0.0080$\pm$0.0092} & \textbf{0.030$\pm$0.072} \\
D-CBF           & \cmark & 3.32$\pm$0.37            & {0.0110}$\pm$0.0068          & 0.039$\pm$0.086          \\
NMPC       & \xmark & 42.03$\pm$7.56            & {0.0360}$\pm${0.0140}         & {0.130}$\pm$0.090         \\
MPPI       & \xmark & {259.20}$\pm$13.90            & {0.0100}$\pm$0.0092         & 0.047$\pm$0.062           \\
\bottomrule
\end{tabular}}
\end{table}
In \cref{fig:enter-label}, when teleoperating the active leader, the formation motion together with the circularly moving obstacle causes the risk to shift among different arms, leading to periodic drops in $h_{\min}(t)$. The blue curve indicates the prescribed safety margin $d_{\mathrm{safe}}$. Meanwhile, the active leader rotates among the manipulators and is always assigned to the arm currently closest to the obstacle, ensuring safety throughout the execution, as indicated by $h_{\min}(t)>0$ at all times.
And as shown in \cref{tab:four_methods_summary_meanpmrange}, the proposed framework achieves the best performance across all metrics, where the value of computational step time, $E_p$ and $E_{\theta}$ are aggregated for all agents.
Note that a method is marked safe ($\checkmark$) if and only if the
minimum safety barrier remains positive throughout the entire execution,
i.e., $h_{\min}(t)=\min_{i\in\mathcal{V}} h^{env}_i(t) > 0$ and vice versa.
Only Ours and D-CBF satisfy safety constraints throughout the task, while NMPC and MPPI fail to avoid obstacles. Ours attains the lowest per-step solve time, the smallest position error, and the lowest rotational error, showing that the HET architecture achieves real-time feasibility, strict safety compliance, and superior tracking precision over all baseline methods.

\section{CONCLUSION}
This paper presented a distributed hierarchical framework for safe cooperative manipulation with multiple manipulators. 
To enforce strict safety constraints without excessive computational overhead, we developed a three-layer event-triggered architecture that integrates CBFs with a risk-aware leader switching strategy, efficiently distributing safety-critical computation across the network. 
Real-world experiments with two Franka manipulators validated the framework in real-world static obstacle scenarios, while comprehensive Monte Carlo simulations and four-arm scenarios confirmed its scalability and robustness under dynamic obstacle environments. 
Compared to state-of-the-art baselines, the proposed framework maintains strict constraint satisfaction and high-precision formation tracking while substantially reducing QP solve times and communication cost.








\bibliographystyle{ieeetr}
\bibliography{ref}  
\end{document}